\documentclass{article}

\usepackage[preprint]{neurips_2026}

\usepackage[utf8]{inputenc} 
\usepackage[T1]{fontenc}    
\usepackage{hyperref}       
\usepackage{url}            
\usepackage{booktabs}       
\usepackage{wrapfig}
\usepackage{amsfonts}       
\usepackage{nicefrac}       
\usepackage{microtype}      
\usepackage{xcolor}         

\usepackage{microtype}
\usepackage{graphicx}

\usepackage{subfigure}
\usepackage{booktabs} 
\usepackage{tabularx}
\usepackage{amsthm, amsmath, amssymb}
\usepackage{listings}
\usepackage{relsize}
\usepackage{mathtools}
\newcommand{\R}{\mathbb{R}}

\usepackage{dsfont}
\usepackage{hhline}

\usepackage{csquotes}
\usepackage{siunitx}
\usepackage{booktabs}
\numberwithin{equation}{section}
\usepackage{multirow}
\usepackage{dsfont}
\usepackage{tikz}
\usetikzlibrary{arrows.meta, positioning, calc, fit} 
\usetikzlibrary{positioning}
\usepackage{amsfonts}
\usepackage{xcolor}
\usepackage{algorithm}
\usepackage{algorithmic}

\definecolor{statecolor}{RGB}{68, 119, 170} 
\definecolor{costatecolor}{RGB}{34, 136, 51} 

\usepackage{subcaption}
\usepackage{textgreek}
\usepackage[most]{tcolorbox}
\DeclareMathOperator*{\argmin}{argmin}

\usepackage{todonotes}

\definecolor{blue(x)}{rgb}{0.84, 0.83, 1.00}
\usepackage{hyperref}

\theoremstyle{plain}
\newtheorem{theorem}{Theorem}[section]
\newtheorem{proposition}[theorem]{Proposition}

\theoremstyle{definition}
\newtheorem{definition}[theorem]{Definition}

\theoremstyle{remark}

\title{Neural Co-state Policies: Structuring Hidden \\States in Recurrent Reinforcement Learning}

\author{%
  David Leeftink, Max Hinne, Marcel van Gerven \\
  Department of Machine Learning and Neural Computating \\
  Donders Institute for Brain, Cognition and Behaviour \\ 
  Radboud University, Nijmegen, the Netherlands. \\
    \texttt{\{david.leeftink, max.hinne, marcel.vangerven\}@ru.nl}
    } 

\begin{document}

\maketitle

\begin{abstract}
A key capability of intelligent agents is operating under partial observability: reasoning and acting effectively despite missing or incomplete state observations. While recurrent (memory-based) policies learned via reinforcement learning address this by encoding history into latent state representations, their internal dynamics remain uninterpretable black boxes. 
This paper establishes a formal link between these hidden states and the Pontryagin minimum principle (PMP) from optimal control. We demonstrate that for standard recurrent architectures, latent representations map directly to PMP co-states, which allows the readout layer to be interpreted as performing Hamiltonian minimization. 
Because standard reward maximization does not naturally discover this alignment, we introduce a PMP-derived co-state loss to explicitly structure the internal dynamics. 
Empirically, this approach matches or improves performance on partially observable DMControl tasks, and is robust against zero-shot out-of-distribution sensor masking. By framing recurrent networks as dynamic processes governed by the minimum principle, we provide a principled approach to designing robust continuous control policies.
\end{abstract}

\section{Introduction}
A fundamental challenge for intelligent agents is operating effectively under partial observability. Real-world physical tasks are inherently obscured and often characterized by noisy sensors, measurement delays, or missing data. Consequently, biological and artificial systems in continuous control tasks rarely have full access to the true state of their environment. Because a single instantaneous observation is typically insufficient to determine the underlying state of the system, an agent must learn to integrate a history of past events to infer what cannot be directly seen~\citep{kaelbling1998planning}.

In deep reinforcement learning (RL), recurrent policies address partial observability by maintaining a hidden state that continuously accumulates information. While optimizing these networks purely for reward yields strong performance, their internal dynamics remain an uninterpretable black box~\citep{wierstra2010recurrent, hausknecht2015deep}. Without explicit structural constraints, recurrent policies are prone to memorizing fragile heuristics rather than learning a grounded representation of the control task, leaving them vulnerable to breaking down entirely under unfamiliar conditions.

To structure these internal dynamics, this paper establishes a formal link between recurrent policies and the Pontryagin minimum principle (PMP)~\citep{pontryagin1987}. PMP states that optimal continuous control is governed by a Hamiltonian system, where co-state variables co-evolve with the environment to encode optimality conditions. 
As shown in Fig.~\ref{fig:latent_costates}, we formalize this relationship by introducing neural co-state policies (NCP); a framework that explicitly aligns neural memory updates with co-state dynamics. We demonstrate that standard architectures like continuous-time recurrent neural networks (CT-RNNs)~\citep{beer1995dynamics} and gated recurrent units (GRUs)~\citep{chung2014empirical} inherently possess this exact mathematical structure. By mapping their latent memory directly to optimal co-states, the network's final readout layer can be interpreted as performing Hamiltonian minimization. 

While recurrent policies have the capacity for optimal Hamiltonian dynamics, standard reward maximization does not naturally converge to this alignment. To bridge this gap, we introduce an auxiliary co-state loss derived from the Hamilton-Jacobi-Bellman (HJB) equation~\citep{bellman1966dynamic}. Because HJB theory establishes the theoretical co-state as the gradient of the value function~\citep{vinter1986costate}, these targets can be dynamically extracted directly from the learned critic in standard actor-critic architectures. Supervising the actor with these targets explicitly structures the network's hidden states to track the latent optimality conditions of the environment.

We evaluate our approach on partially observable continuous control tasks from DeepMind Control Suite (DMControl;~\citet{tassa2018dmc}). Empirically, applying the co-state loss to CT-RNN and GRU architectures matches or improves performance over recurrent policies trained with proximal policy optimization (PPO) baselines~\citep{schulman2017ppo}. We furthermore show that the learned, structured internal dynamics demonstrate robustness to increased out-of-distribution sensor dropout.

Ultimately, this work bridges a critical gap between classical optimal control and deep RL by anchoring previously black-box hidden states in the minimum principle's mathematical structure. By framing recurrent architectures as dynamic processes governed by this principle, we provide a theoretically aligned foundation to design robust continuous control policies.

\section{Background: Recurrent Policy Optimization}\label{sec:2}
\begin{figure}[t]
    \centering
    \resizebox{\columnwidth}{!}{%
    \begin{tikzpicture}[
        >=stealth,
        neuron/.style={circle, draw=violet!80, fill=violet!10, minimum size=6mm, thick, inner sep=1pt},
        latentpool/.style={circle, draw=black!40, dashed, fill=gray!5, minimum size=4.5cm, thick},
        io/.style={rectangle, draw=black!70, fill=blue!5, minimum size=8mm, thick, rounded corners=2mm},
        recurrentedge/.style={->, thick, violet!60, shorten >=1pt, shorten <=1pt},
        mainedge/.style={->, thick, black!80, shorten >=1pt}
    ]

        \node[latentpool] (pool) at (0,0) {};
        \node[anchor=south, font=\sffamily\bfseries, xshift=6pt,yshift=5pt] at (pool.north) {\normalsize Neural co-states};

        \node[yshift=-.5cm,font=\normalsize] at (pool.north) {\textcolor{violet}{$ \dot{h} \leftrightarrow \dot{\lambda}^{\star}$}};
        \node[yshift=-1.2cm,font=\footnotesize] at (pool.north) {$\dot{h} = {-}B_\theta(y) {-}F_\theta(y,u) h$};

        \node[neuron] (l1) at (-0.9, 0.2) {$h_1$};
        \node[neuron] (l2) at (0.9, 0.1) {$h_2$};
        \node[neuron] (l3) at (-1.2, -0.8) {$h_3$};
        \node[neuron] (li) at (0.2, -1.4) {$h_i$};
        \node[neuron] (ln) at (1.1, -0.9) {$h_n$};
        
        \node[font=\bfseries, violet!80] at (0.2, -0.4) {$\dots$};

        \draw[recurrentedge] (l1) to[bend left=20] (l2);
        \draw[recurrentedge] (l2) to[bend left=15] (ln);
        \draw[recurrentedge] (ln) to[bend left=20] (li);
        \draw[recurrentedge] (li) to[bend left=15] (l3);
        \draw[recurrentedge] (l3) to[bend left=20] (l1);
        
        \draw[recurrentedge] (l2) to[bend right=25] (l3);
        \draw[recurrentedge] (li) to[bend right=15] (l1);

        \draw[recurrentedge] (l1) to[out=135, in=180, looseness=5] (l1);
        \draw[recurrentedge] (ln) to[out=315, in=360, looseness=5] (ln);


        \node[io] (x) at (-4.35, 0) {$\quad x(t)\quad $};
        \node[above=32pt, font=\sffamily\normalsize, align=center, xshift=16pt, yshift=2pt] (xtitle) at (x.north) {\textbf{State dynamics}};
            \node[above=4pt, font=\normalsize, xshift=10pt] at (x.north) {$\begin{aligned}
                \dot{x} &= f(x) + g(x)u \\
                y &= \phi(x) + v
            \end{aligned}$};
        
        \draw[mainedge] (x) -- node[above, font=\normalsize] {$y(t)$} (pool.west);    

        \node[io] (u) at (4.5, 0) {$\quad u^{\star}(t)\quad $};
        
        \node[above=24pt, font=\sffamily\normalsize, align=center, xshift=-9pt] (utitle) at (u.north) {\textbf{Hamiltonian}\\\textbf{minimization}};
        \node[above=8pt, xshift=-9pt,font=\normalsize] at (u.north) {$\argmin_{u \in \mathcal{U}} \mathcal{H}$};
        \draw[mainedge] (pool.east) -- node[above, font=\normalsize] {$h(t)$}(u);

        \draw[mainedge, dashed, rounded corners=15pt] (u.south) -- +(0,-2.70) -| (x.south) 
            node[pos=0.25, below, font=\sffamily\normalsize] {Closed-loop control};

        \node[yshift=-3cm] (critic) at (-.9,0.4) {\normalsize $\: V(y)\: $};
        \node[yshift=-3cm] (loss) at (.9,0.4) {\normalsize $\mathcal{L}_{\text{co-state}}$};

        \draw[recurrentedge, rounded corners=12pt] (pool.210) |- (critic.west);

        \draw[recurrentedge] (critic) -- (loss);

        \draw[recurrentedge, rounded corners=12pt] (loss.east) -| (pool.330);

        \begin{scope}[shift={(7., -0.5)}] 
            
            \begin{scope}[shift={(-0.3, -0.35)}]
                \filldraw[fill=violet!10, draw=black, thick] 
                    (-0.6, -0.4) -- (-0.3, -0.8) -- (0.5, -0.7) -- (0.85, 0.85) -- (-0.4, 0.6) -- cycle;
                \node at (0.0, 0.0) {\normalsize $\mathcal{U}$};

                \draw[->, thick] (-0.9, -0.9) -- (1.8, -0.9) node[above] {\footnotesize $u_1$};
                \draw[->, thick] (-0.9, -0.9) -- (-0.9, 1.6) node[right] {\footnotesize $u_2$};

                \draw[thick, black, dashed] (0., 1.3) 
                    to[out=285, in=135] (0.5, 0.5) 
                    to[out=315, in=165] (1.3, 0.0);
                \node[right, black!80] at (1.2, -0.2) {\footnotesize $\mathcal{H} > c$};

                \draw[thick, black!80] (0.35, 1.55) 
                    to[out=285, in=135] (0.85, 0.85) 
                    to[out=315, in=165] (1.65, 0.35);
                \node[right, black!80] at (1.2, 0.2) {\footnotesize $\mathcal{H}=c$};

                \draw[thick, black!80, densely dotted] (0.85, 1.7) 
                    to[out=285, in=135] (1.2, 1.2) 
                    to[out=315, in=165] (2.0, 0.7);
                \node[right, black!80] at (1.2, 0.6) {\footnotesize $\mathcal{H} < c$};

                \draw[->, very thick, color=violet!80] (0.85, 0.85) -- ++(0.5, 0.5) node[right, xshift=-8pt, yshift=4pt] {\footnotesize $-\nabla_u \mathcal{H}$};

                \node[circle, fill=violet!10, draw=violet!80, thick, inner sep=1.2pt] (ustar) at (0.85, 0.85) {};
                \node[above left, color=violet!80, font=\normalsize, xshift=7pt, yshift=3pt] at (ustar) {$u^*$};

                \node[align=center] at (.6, 2.0) {\footnotesize $\mathcal{H} = \mathcal{L}(x,u) + h^\top \dot{x}$};
            \end{scope}
            
            \coordinate (HamPlotBL) at (-1.4, -1.4); 
            \coordinate (HamPlotTR) at (1.96, 1.96);

        \end{scope}
        
        \begin{scope}[shift={(-7.55, -0.5)}, scale=0.5, every node/.style={transform shape=false}]
            \node[align=center, font=\sffamily\footnotesize, color=gray!80, yshift=0] (robot) at (1.1, 0.25) {
                \includegraphics[width=2.5cm]{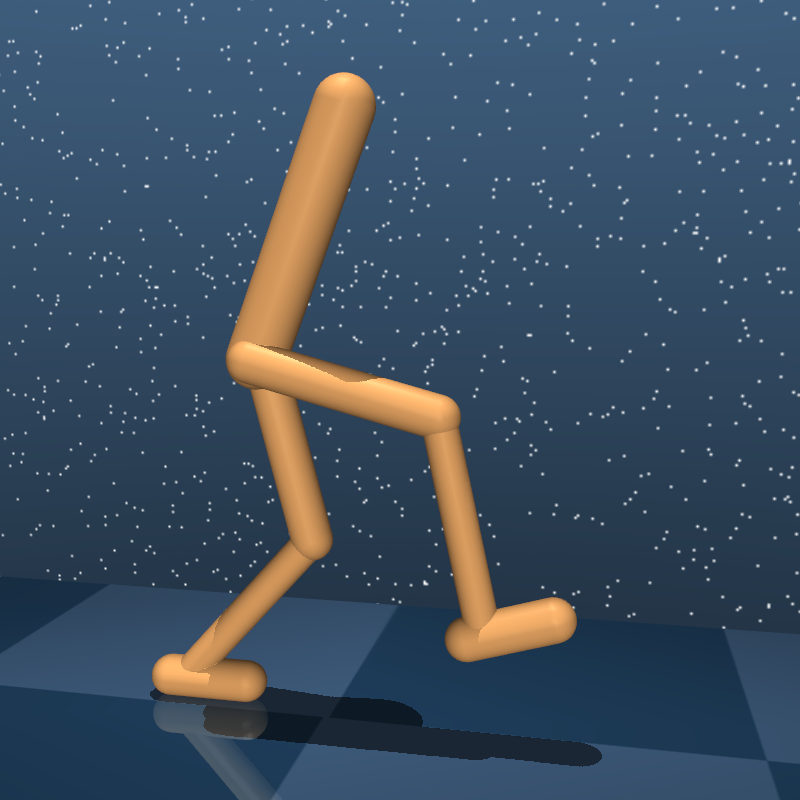}
            };
            \coordinate (RobotBL) at (-2.8, -2.8); 
            \coordinate (RobotTR) at (4., 4.);
        \end{scope}
    \end{tikzpicture}
    }
     \caption{\textbf{Recurrent Reinforcement Learning via Neural co-state policies. } NCP s are regularized during training to mirror the optimality conditions implied by the minimum principle. By structuring the hidden states as representations of the underlying optimal control co-states, the read-out layer acts as a control-Hamiltonian minimizer.}
    \label{fig:latent_costates}
\end{figure}
\subsection{Continuous control under partial observability} 
Many real-world reinforcement learning (RL) domains, such as robotics and autonomous navigation, involve physical systems governed by continuous-time dynamics where the agent only receives partial, noisy sensor readings. We formalize this setting as a continuous dynamical system described by a differential equation and an observation equation: 
\begin{align}
\dot{x}(t) &= f(x(t)) + g(x(t)) u(t) \label{eq:trueODE_x}\\ 
y(t) &= \phi(x(t)) + v(t) \label{eq:trueODE_y} 
\end{align}
where $x \in \mathcal{X} \subseteq \mathds{R}^{d_x}$ represents the true underlying system state, $u(t) \in \mathcal{U} \subseteq \mathds{R}^{d_u}$ is the control input applied by the agent, and $y(t) \in \mathcal{O} \subseteq \mathds{R}^{d_y}$ is the noise-corrupted observation with sensor noise $v(t)$.   
We assume the true drift dynamics $f$ and control-influence matrix $g$ are continuous and differentiable with respect to $x$. In the standard RL paradigm, these underlying transition dynamics are unknown to the agent, but they can be evaluated by executing actions and collecting observation trajectories over multiple learning episodes. 

Finding the optimal control sequence $u(t)$ requires iteratively minimizing the cost objective:
\begin{equation}
\begin{aligned}
    J({u}) = \mathds{E}_{v}\left[\Phi (x(t_f)) + \int_{t_0}^{t_f} \Big( q(x(\tau)) + c(u(\tau)) \Big) d\tau \right] .\label{eq:objective}
\end{aligned}
\end{equation}
Here, $\Phi(x)$ is the terminal cost, $q(x)$ is the state cost, and $c(u)$ is the control input cost. We assume $q$ and $c$ are differentiable and can be evaluated empirically, though their underlying analytical forms are not known a priori. Because the sequence of observations is stochastic, finding the optimal control requires minimizing this expected cost. If the goal is to maximize the reward, instead $-J(u)$ can be maximized by using the running reward function as integrand.

\subsection{Policies with memory}
To minimize this cost functional, consider a policy $\pi_{\theta} \colon \mathcal{O} \to \mathcal{U} $, such that $u_{\theta}(t) = \pi_{\theta}(y(t))$ and $\theta \in \Theta$ are the policy parameters. This is a \textit{memory-less} controller that only considers the (observation of) the current state to decide what action to take. 
The control inputs generated by this policy can be said to be optimal if they minimize the cost objective:
\begin{equation}
    \begin{split}
        \theta^{\star} &\coloneqq \argmin_{\theta\in \Theta} J(\pi_{\theta}(x(t)))    \label{eq:optimalcontrolproblem}
\\
    \text{s.t.} \quad \quad  \dot{x}(t) &= f\left(x(t)\right) + g\left(x(t)\right) \pi_{\theta}(y(t))  \quad \text{ for } t \in [t_0, t_f] \enspace \\ 
    x(t_0) &= x_0 \enspace,
    \end{split}
\end{equation}  where $x_0$ is the initial system state.

Because the true state $x(t)$ cannot be fully inferred from a single observation $y(t)$, this memory-less controller is fundamentally suboptimal. This necessitates a policy with memory, which maintains an internal latent state $h(t) \in \mathds{R}^{d_h}$ that acts as a dynamical process coupled to the environment, where $d_h$ is the dimensionality of the hidden state. In continuous time, this takes the generic form:
\begin{align}
    \dot{h}(t) &= \psi_\theta(y(t), h(t)) ,\label{eq:rnn_hidden}\\
    u(t) &= \pi_\theta(h(t)), \label{eq:rnn_action}
\end{align}
where $\psi_\theta$ determines the evolution of the latent state. Common architectures for such dynamic policies CT-RNN or GRU~\citep{wierstra2010recurrent}. However, in standard deep RL practice, these methods treat the hidden state $h(t)$ as an unstructured and uninterpretable black-box.

\section{Recurrent Policies as Optimal Dynamic Processes}
In continuous control, optimizing policy parameters is fundamentally governed by the underlying structure of the stochastic optimal control problem in Eq.~\eqref{eq:objective}. This structure is characterized by the \textit{necessary conditions of optimality}, traditionally approached via the Hamilton-Jacobi-Bellman (HJB) equation or PMP~\citep{pontryagin1987, kirk2004optimal, bryson1975applied}. While HJB characterizes optimality globally across the state space, PMP directly describes optimality conditions along the trajectory. In what follows, we leverage PMP to demonstrate that the
hidden state of a recurrent policy can be mathematically mapped onto the theoretical co-states of an optimal Hamiltonian system. While the stochastic optimal control problem is addressed via the stochastic minimum principle, we leverage the property that the internal dynamics of recurrent networks operate deterministically and ground our architectural mapping in the deterministic PMP.

\subsection{Pontryagin Minimum Principle and Hamiltonian Systems}\label{sec:3.1}
To study optimality over time, we consider the control-Hamiltonian, which evaluates the rate of change of the optimal value over time along the trajectory:
\begin{equation}
    \mathcal{H}(x, \lambda, u) \coloneqq \mathcal{L}(x,u) + \lambda^{\top} \dot{x} = q(x) + c(u) + \lambda^{\top} \bigl(f(x) + g(x)u\bigr).
\end{equation}
Here, $\lambda \in \R^{d_x}$ represents the \textit{co-state} (or adjoint) vector. Acting as a continuous-time Lagrange multiplier, the co-state tracks the sensitivity of the value function to infinitesimal perturbations in the current state, and is connected to the value function via $\lambda(t) = \nabla V^{\star}(x)$~\citet{vinter1986costate}. By differentiating the Hamiltonian w.r.t. the states and co-states, we obtain a coupled, continuous-time optimal boundary value problem. PMP states that an optimal control $u^{\star}(t)$ must minimize the Hamiltonian at all times $t \in [t_0, t_f]$. 

\begin{proposition}[Optimal Hamiltonian System] \label{thrm:hamiltonian*}
Let $u^{\star}(t)$ be an optimal solution for the control problem. The optimal trajectory $x^{\star}(t)$ and its corresponding co-state $\lambda^{\star}(t)$ must satisfy the following two-point boundary value problem for $t \in [t_0, t_f]$:
\begin{equation}
    \begin{split}
        \dot{x}^{\star} &= \nabla_\lambda \mathcal{H} = f(x^{\star}) + g(x^{\star}) u^{\star} \\ 
    \dot{\lambda}^{\star} &= -\nabla_x \mathcal{H} = -\nabla_x q - \left(\nabla_x \dot{x}^{\star}\right) \lambda^{\star}\\
    u^{\star} &= \argmin_{u\in\mathcal{U}} \mathcal{H}(x^\star, \lambda^\star, u) \\
    \text{s.t.} \quad & x^{\star} (t_0) = x_0 \quad \text{and} \quad \lambda^{\star} (t_f) = \nabla_x \Phi (x^{\star} (t_f)) \enspace.
    \end{split}
\end{equation} where the notation $(\nabla_y f)_{i,j} \coloneqq \partial f_j / \partial y_i$ denotes Jacobians, such that the gradient of the scalar Hamiltonian yields the column vector $\nabla_x \mathcal{H} = ( \frac{\partial \mathcal{H}}{\partial x_1}, \ldots, \frac{\partial \mathcal{H}}{\partial x_{d_x}} )^{\top}$.
\end{proposition}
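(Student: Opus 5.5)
We prove the proposition by the classical variational (Lagrange-multiplier) derivation of the Pontryagin minimum principle, specialised to the control-affine dynamics \eqref{eq:trueODE_x} and the deterministic version of the cost \eqref{eq:objective}. As stated at the start of this section, the stochastic setting is not needed here: the proposition concerns the deterministic PMP.

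\textbf{Step 1: adjoin the dynamics.} We form the augmented functional
\[
\bar J = \Phi(x(t_f)) + \int_{t_0}^{t_f} \Big( \mathcal{H}(x,\lambda,u) - \lambda^\top \dot x \Big)\, d\tau ,
\]
which equals $J$ along every admissible trajectory, for an arbitrary multiplier $\lambda(\cdot)$. The first condition, $\dot x^\star = \nabla_\lambda \mathcal{H}$, is immediate: since $\mathcal{H}$ is affine in $\lambda$, we have $\nabla_\lambda\mathcal{H} = f + g u$, so this condition just restates the constraint, together with $x^\star(t_0)=x_0$.

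\textbf{Step 2: first variation and choice of $\lambda$.} We perturb $u^\star \mapsto u^\star + \delta u$, which induces a state variation $\delta x$ with $\delta x(t_0)=0$. Integrating $-\lambda^\top \delta\dot x$ by parts gives
\[
\delta \bar J = \big(\nabla_x\Phi - \lambda\big)^\top \delta x \big|_{t_f} + \int_{t_0}^{t_f} \Big( (\nabla_x \mathcal{H} + \dot\lambda)^\top \delta x + (\nabla_u \mathcal{H})^\top \delta u \Big) d\tau .
\]
We then choose $\lambda$ to cancel the $\delta x$ terms. This is a linear ODE with a terminal condition, so it has a unique solution by the assumed continuity and differentiability of $f$, $g$ and $q$, and it yields two of the stated conditions:
\begin{itemize}
\item the terminal condition $\lambda^\star(t_f)=\nabla_x\Phi(x^\star(t_f))$;
\item the co-state equation $\dot\lambda^\star = -\nabla_x\mathcal{H}$. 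Expanding the gradient, $\nabla_x \mathcal{H} = \nabla_x q + (\nabla_x \dot x)\lambda$, where $\nabla_x(\lambda^\top \dot x) = (\nabla_x \dot x)\lambda$ under the paper's convention $(\nabla_x \dot x)_{ij} = \partial \dot x_j/\partial x_i$. This matches the stated equation.
\end{itemize}
After this choice, $\delta J = \int_{t_0}^{t_f} (\nabla_u\mathcal{H})^\top \delta u\, d\tau$.

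\textbf{Step 3: Hamiltonian minimization (the main obstacle).} The variational argument only gives stationarity, $\nabla_u\mathcal{H}=0$, and only for interior controls. The proposition asserts global minimization over a possibly constrained set $\mathcal{U}$. To obtain this we use needle (spike) variations: on a short interval $[s, s+\varepsilon]$ replace $u^\star$ by an arbitrary constant $v\in\mathcal{U}$. To first order in $\varepsilon$, the cost changes by
\[
\varepsilon\big(\mathcal{H}(x^\star,\lambda^\star,v) - \mathcal{H}(x^\star,\lambda^\star,u^\star)\big)(s).
\]
The state perturbation that propagates after $s+\varepsilon$ is handled by the same adjoint $\lambda^\star$ as in Step 2, because the linearized state perturbation satisfies the variational equation and $\lambda^\top \delta x$ is conserved along it. Optimality of $u^\star$ forces this first-order change to be nonnegative for every Lebesgue point $s$ and every $v\in\mathcal{U}$, which gives $u^\star=\argmin_{u\in\mathcal U}\mathcal{H}$. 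The delicate part is the $O(\varepsilon)$ bookkeeping of the perturbed trajectory, which requires Gronwall-type bounds using the Lipschitz regularity of $f+gu$. For that standard argument we would cite \citep{pontryagin1987, kirk2004optimal} rather than redo it. In the present control-affine setting with differentiable convex $c$ and a convex $\mathcal{U}$, we would alternatively note that $\mathcal{H}$ is convex in $u$, so the first-order condition over $\mathcal{U}$ obtained from Step 2 (a variational inequality) already characterises the minimizer.
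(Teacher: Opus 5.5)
Your proposal is correct and follows essentially the same route as the paper's derivation in Appendix~\ref{app:A}: adjoin the dynamics with $\lambda$, choose $\lambda^\star$ to cancel the $\delta x$ terms (giving $\dot\lambda^\star=-\nabla_x\mathcal{H}$ and $\lambda^\star(t_f)=\nabla_x\Phi(x^\star(t_f))$), and read off stationarity from the remaining $\int(\nabla_u\mathcal{H})^\top\delta u\,d\tau$; like the paper, you defer global Hamiltonian minimization over $\mathcal{U}$ to the needle-variation argument, which the paper only cites while you sketch it, and you also carry out the integration by parts that the paper takes from Kirk. One small correction to that sketch: with a running cost $q$, $\lambda^{\star\top}\delta x$ is not conserved but satisfies $\frac{d}{dt}\bigl(\lambda^{\star\top}\delta x\bigr)=-(\nabla_x q)^\top\delta x$ (it is conserved only after augmenting the state with the accumulated running cost), and this is exactly what makes $\lambda^\star(s+\varepsilon)^\top\delta x(s+\varepsilon)$ the first-order change of the remaining cost, so your formula $\varepsilon\bigl(\mathcal{H}(x^\star,\lambda^\star,v)-\mathcal{H}(x^\star,\lambda^\star,u^\star)\bigr)(s)$ still stands.
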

\textit{Proof:} The derivation follows standard extremization of the cost functional using variational calculus. A full derivation is provided in Appendix A.

\subsection{The Neural Co-state Policy} 
The PMP framework thus describes optimality conditions through two coupled continuous-time processes: the physical system states $x$, and the co-states $\lambda$, which dictate the optimal control inputs. These systems are inherently coupled: the co-states generate actions that drive the physical environment, while the evolution of the environment continuously updates the co-states via the Hamiltonian gradients.

The central premise of this work is that this coupled structure mirrors the abstraction of recurrent reinforcement learning. In partially observable RL, recurrent policies aim to control an environment $x$ by generating actions from an internal, latent dynamical process $h$. However, standard practice treats this hidden state as an uninterpretable black box. We propose the view that to achieve optimal control, the latent recurrent process $h(t)$ should act as a high-dimensional neural embedding of the theoretically optimal co-state $\lambda^{\star}(t)$.

This analogy provides a mathematical blueprint for designing recurrent architectures. To formalize this connection, we introduce the neural co-state policy (NCP) class that mimics the theoretical optimal Hamiltonian system through learned neural representations.

\begin{definition}[Neural Co-state Policy] \label{def:DNP*}
A neural co-state policy (NCP) is a system defined by:
\begin{equation}
    \begin{split}
        \dot{x} &= f(x) + g(x) u, \qquad  
            y = \phi(x ) + v,   \label{eq:DNPstar_y} \\
    \dot{h} &= -B_{\theta}(y) - F_{\theta}(y,u) h   \\
    u &= \argmin_{u\in \mathcal{U}} \{ c(u) + u^{\top} G_{\theta}(y) h \} \\
    \text{s.t.} \quad & x (t_0) = x_0, \quad \text{and} \quad h(t_f) = \nabla_y \Phi(y(t_f)), 
    \end{split}
\end{equation} where $h \in \R^{d_h}$ acts as the memory state of the network. The policy class separates various learnable components to represent the unknown environmental dynamics: $F_{\theta}$ models the state transition Jacobians $\nabla_x \dot{x}$, $G_{\theta}$ models the control-influence matrix $g(x)$, and $B_{\theta}$ models the state cost derivative $\nabla_x q(x)$. By defining the system strictly through its initial state $x(t_0)$ and its terminal hidden state $h(t_f)$, the NCP forms a two-point boundary value problem (TPBVP). 
\end{definition}

Assuming a standard continuous action space and a quadratic control penalty $c(u) = u^{\top}R u$ (where $R \succ 0$), evaluating $\nabla_u \mathcal{H} = 0$ yields a closed-form optimal control law for the NCP: $u^{\star} = -\frac{1}{2} R^{-1} G_{\theta}(y) h^{\star}$.

\begin{proposition}[NCP Optimality Condition]
Consider the optimal Hamiltonian system from Proposition~\ref{thrm:hamiltonian*} and the NCP system from Definition~\ref{def:DNP*}. For $\theta^{\star}$ to yield optimal trajectories, it is a necessary condition for the NCP components to act as functional representations of the true Hamiltonian terms:
\begin{align}
    B_{\theta^{\star}}(y) \approx \nabla_x q(x), \qquad 
    F_{\theta^{\star}}(y,u) \approx \nabla_x \dot{x}, \qquad 
    G_{\theta^{\star}}(y) \approx g(x)^{\top}. \nonumber
\end{align}
Consequently, the optimal latent state $h(t)$ is encouraged to converge to a structural embedding of the true theoretical co-state $\lambda^{\star}(t)$.
\end{proposition}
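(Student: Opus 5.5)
The argument is a term-by-term matching of the NCP equations (Definition~\ref{def:DNP*}) against the optimal Hamiltonian system of Proposition~\ref{thrm:hamiltonian*}. Throughout, ``$\approx$'' is read as equality modulo an (observation-dependent) linear embedding $h = P\lambda$, evaluated along optimal trajectories. We identify $h$ with $\lambda$ directly when $d_h = d_x$ and $y$ determines $x$. Otherwise we fix a left-invertible $P \in \R^{d_h \times d_x}$ and lift the true Hamiltonian terms through $P$.

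The first step is to assume that $\theta^\star$ generates the optimal control, $u_{\theta^\star}(t) = u^\star(t)$ for all $t$, so that the physical trajectories coincide, $x(t) = x^\star(t)$. We then compare the two control laws. With $c(u) = u^\top R u$, the NCP gives
\[
u^\star = -\tfrac12 R^{-1} G_{\theta^\star}(y)\, h ,
\]
while minimizing $\mathcal H$ gives
\[
u^\star = -\tfrac12 R^{-1} g(x^\star)^\top \lambda^\star .
\]
Equating the two along the trajectory yields
\[
G_{\theta^\star}(y)\, h = g(x^\star)^\top \lambda^\star .
\]
If $h = P\lambda^\star$ and this identity must hold for the full range of co-states visited, which requires some richness or excitation assumption on the trajectories, then $G_{\theta^\star} P = g^\top$. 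This is the stated condition $G_{\theta^\star} \approx g^\top$.

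The second step differentiates $h = P\lambda^\star$ in time and substitutes the co-state equation $\dot\lambda^\star = -\nabla_x q - (\nabla_x \dot x^\star)\lambda^\star$. This gives
\[
\dot h = -P\nabla_x q - P(\nabla_x \dot x^\star)\lambda^\star .
\]
Comparing with $\dot h = -B_{\theta^\star}(y) - F_{\theta^\star}(y,u)\,h$ and separating the parts that are affine and linear in the co-state gives $B_{\theta^\star} = P\nabla_x q$ and $F_{\theta^\star} P = P\,\nabla_x \dot x$, i.e.\ $B_{\theta^\star}\approx\nabla_x q$ and $F_{\theta^\star}\approx\nabla_x\dot x$ in embedded coordinates. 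The boundary condition $h(t_f) = \nabla_y\Phi$ then corresponds to $\lambda^\star(t_f) = \nabla_x\Phi$ through the chain rule via $\phi$. Since $h$ and $P\lambda^\star$ solve the same linear ODE with the same terminal condition, uniqueness of solutions forces $h(t) = P\lambda^\star(t)$ on $[t_0,t_f]$. This establishes the claim that $h$ is a structural embedding of $\lambda^\star$.

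The main obstacle is the separation step in the second paragraph. Concluding $B = P\nabla_x q$ and $F P = P\nabla_x\dot x$ from a single equality along one trajectory is not justified pointwise, since only a combination of the two terms is pinned down. I would first try to obtain the separation by requiring the identity over a family of initial conditions $x_0$, so that the set of pairs $(\lambda^\star(t), x^\star(t))$ visited spans enough directions to isolate the affine and linear parts. A second difficulty is that $B_\theta, F_\theta, G_\theta$ depend on $y$ rather than on $x$. Under partial observability the conditions can therefore only hold in an approximate, conditional-expectation sense, which is why the statement uses $\approx$. I would state this explicitly as the regime in which the necessity argument is exact, namely $\phi$ injective and noise-free, and treat the general case as approximate.
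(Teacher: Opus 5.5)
\textbf{Gap 1: the embedding is assumed, not derived.} Your argument treats $h = P\lambda^{\star}$ as an input rather than an output. The only relation optimality of $\theta^{\star}$ gives you is $G_{\theta^{\star}}(y)\,h = g(x^{\star})^{\top}\lambda^{\star}$ along the optimal trajectory. Both of your steps start from $h = P\lambda^{\star}$. The closing uniqueness argument then recovers $h = P\lambda^{\star}$ from matching conditions that were derived by assuming it, so the argument is circular.

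The uniqueness step itself is sound, but it proves the converse. If $B_{\theta} = P\nabla_x q$, $F_{\theta}P = P\nabla_x\dot{x}$, $G_{\theta}P = g^{\top}$ and the terminal data are compatible, then $(x^{\star}, P\lambda^{\star})$ solves the NCP boundary value problem and the readout returns $u^{\star}$.

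Necessity cannot be reached this way, because the hidden state is constrained only through the readout. Under your constant-$P$ reading the necessity claim is in fact false. Take $y = x$, $\Phi = 0$, and a smooth $\mu(x) > 0$ that is non-constant along $x^{\star}$. Set $h = \mu\lambda^{\star}$, $B_{\theta} = \mu\nabla_x q$, $F_{\theta} = \nabla_x\dot{x} - (\dot{\mu}/\mu)I$ and $G_{\theta} = g^{\top}/\mu$.
\begin{itemize}
\item This is a legitimate NCP, since $\dot{\mu} = (\nabla_x\mu)^{\top}(f + gu)$ depends only on $(y,u)$.
\item It satisfies $\dot{h} = -B_{\theta} - F_{\theta}h$ and $h(t_f) = 0$, and it returns exactly $u^{\star}$.
\item No constant $P$ satisfies your relations. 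Already $\mu\nabla_x q = P\nabla_x q$ would make $\nabla_x q(x)$ an eigenvector of $P$ with eigenvalue $\mu(x)$ at every point of $x^{\star}$. That is impossible once $\mu$ takes infinitely many values where $\nabla_x q \neq 0$.
\end{itemize}
If instead you let $P$ depend on $y$, as your opening sentence allows, your second step is miscomputed. Then $\dot{h} = \dot{P}\lambda^{\star} + P\dot{\lambda}^{\star}$, and the condition becomes $F_{\theta}P = P\nabla_x\dot{x} - \dot{P}$.

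\textbf{Gap 2: the separation repair and the terminal step.} Your proposed repair of the separation step also fails. Along optimal trajectories the co-state is slaved to the state, $\lambda^{\star}(t) = \nabla_x V^{\star}(x^{\star}(t),t)$; this is the same HJB identity the paper uses to build its targets. So at a fixed $y$ the visited co-states form at most a curve in $t$, and only a single point in a stationary problem. Moreover, $F_{\theta}$ may depend on $u$, which co-varies with $\lambda^{\star}$. Hence the split between the affine term $B_{\theta}(y)$ and the term $F_{\theta}(y,u)h$ is in general not identified on the visited set, whatever family of initial conditions you use.

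The terminal step is also looser than you state. The chain rule gives $\lambda^{\star}(t_f) = (\nabla_x\phi)\nabla_y\Phi$. So $h(t_f) = P\lambda^{\star}(t_f)$ requires $P\,\nabla_x\phi(x^{\star}(t_f))$ to fix $h(t_f)$, which ties $P$ to the terminal state unless $\phi$ is linear.

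\textbf{Comparison with the paper.} The paper gives no derivation for this proposition. Its only support is the colour-coded term-by-term parallel of Figure~\ref{fig:pmp-ctrnn-parallel}, which tacitly identifies $h$ with $\lambda^{\star}$ exactly as you do, and it hedges with $\approx$ and ``encouraged to converge''. A correct version of your write-up must do one of two things. Either adopt $h = P\lambda^{\star}$ as a hypothesis, with $P$ constant, left-invertible and compatible with $\phi$ at $t_f$. Or state the result as the converse that your uniqueness argument actually proves. As a strict necessity statement it cannot be proved.
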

This proposition establishes our theoretical goal: if the neural hidden state $h(t)$ can be explicitly regularized to functionally align with the true co-state $\lambda^{\star}(t)$, the policy naturally recovers the theoretically optimal control law through its learned latent projections. Figure~\ref{fig:pmp-ctrnn-parallel} makes this alignment explicit.

\begin{figure}[t]
\centering
\resizebox{\linewidth}{!}{
\begin{tikzpicture}[scale=1., every node/.style={align=center, font=\normalsize}]
    
    \tikzstyle{boxstyle} = [text width=7.0cm, inner sep=2mm, align=center]

    \node[boxstyle, draw=statecolor, fill=statecolor!10] (left) at (-3.9, 0) {
        \textbf{Optimal Hamiltonian system} \\[0.2em]
        \begin{minipage}{6.8cm}
        \begin{equation*}
        \setlength{\abovedisplayskip}{0pt}
        \setlength{\belowdisplayskip}{0pt}
        \begin{aligned}
            \dot{x}^{\star} &= f(x^{\star}) + g(x^{\star}) u^{\star},  \\
            \dot{\lambda}^{\star} &= -\textcolor{blue}{\nabla_x q} - \textcolor{teal}{\left(\nabla_x \dot{x}^{\star}\right) }\lambda^{\star},   \\ 
            u^{\star} &= \argmin_{u\in\mathcal{U}}\{ c(u) + u^{\top} \textcolor{olive}{g(x^{\star})^{\top}} \lambda^{\star} \},   \\
            \text{s.t.} \quad & x^{\star} (t_0) = x_0,\nonumber  \\
            & \lambda^{\star} (t_f) = \nabla_x \Phi (x^{\star} (t_f)). 
        \end{aligned}
        \end{equation*}
        \end{minipage}
    };

    \node[boxstyle, draw=costatecolor, fill=costatecolor!10] (right) at (3.9, 0) { 
        \textbf{Neural Co-state Policy} \\[0.2em]
        \begin{minipage}{6.8cm}
        \begin{equation*}
        \setlength{\abovedisplayskip}{0pt}
        \setlength{\belowdisplayskip}{0pt}
        \begin{aligned}
            \dot{x}^{\star} &= f(x^{\star}) + g(x^{\star}) u^{\star}, \quad 
            y^{\star} = \phi(x^{\star} ) + v,  \\
            \dot{h}^{\star} &= -\textcolor{blue}{B_{\theta}(y^{\star})} - \textcolor{teal}{ F_{\theta}(y^{\star},u^{\star})} h^{\star},   \\
            u^{\star} &= \argmin_{u\in\mathcal{U}} \{ c(u) + u^{\top} \textcolor{olive}{G_{\theta}(y^{\star})} h^{\star} \}, \\
            \text{s.t.} \quad & x^{\star} (t_0) = x_0,\nonumber  \\
            & h^{\star} (t_f) = \nabla_y \Phi (y^{\star} (t_f)). 
        \end{aligned}
        \end{equation*}
        \end{minipage}
    };

\end{tikzpicture}
} 
\caption{\textbf{Structural parallel between the optimal Hamiltonian system (left) and the NCP (right)}: an optimal NCP system will learn a representation such that the hidden state $h^{\star}$ is a function approximator of the optimal co-state $\lambda^{\star}$ of the optimal Hamiltonian system. This results in a policy that follows an optimal dynamic control law via a latent continuous-time process.}
\label{fig:pmp-ctrnn-parallel}
\end{figure}

\subsection{The Control-Hamiltonian Structure of Recurrent Networks}
The NCP framework does not necessarily demand a fundamentally new network architecture; rather, it provides a unifying mathematical lens through which to reinterpret existing models. To demonstrate this, consider the optimal latent dynamics and corresponding actions defined by the NCP class:
\begin{align}
    \text{(NCP)} \quad \dot{h}^{\star} &= -B_{\theta}(y) - F_{\theta}(y,u^{\star}) h^{\star}, \quad \text{and} \quad u^{\star} = -\frac{1}{2} R^{-1} G_{\theta}(y)h^{\star}.
\end{align}
We can examine standard architectures, such as CT-RNN and GRU, against these theoretical conditions. By isolating the core affine transformations that drive their hidden state updates (omitting time-constants and leak rates for structural clarity), we observe the following algebraic parallel:
\begin{align}
    \text{(CT-RNN)} \quad \dot{h} &\propto \phi\bigl( W_{\text{in}} y + W_{h} h + b \bigr), &\text{and} \quad u &= W_{\text{out}} h &\qquad \qquad \label{eq:CTRNN-hiddenstate} \\ 
    \text{(GRU)} \quad \dot{h} &\propto \phi\bigl( W_{\text{in}} y + W_h (r \odot h) + b \bigr), &\text{and} \quad u &= W_{\text{out}} h &\qquad \qquad \label{eq:GRU-hiddenstate}
\end{align}
where $\phi(\cdot)$ denotes the non-linear activation function and $r$ is the GRU reset gate \citep{chung2014empirical}. 

While these standard equations apply non-linearities over the hidden state differential and include bias terms, their underlying affine structure inherently mirrors the coupled systems of PMP. By treating the learned weights as state-dependent matrix operators, we can extract a direct functional mapping to the NCP dynamics:
\begin{itemize}
    \item \textbf{The state-cost gradient ($B_\theta$):} The input projection matrix $W_{\text{in}} y$ processes the current observation, acting as the structural equivalent to the marginal state cost $-B_\theta(y)$. 
    Notably, if the environment's true state cost $q(x)$ is quadratic, its gradient is strictly linear. This makes standard RNNs well-suited to capture optimal control in tasks with quadratic reward formulations.
    
    \item \textbf{The dynamics Jacobian ($F_\theta$):} The recurrent weight matrices $W_{h}$ serve the role of the dynamics Jacobian $-F_{\theta}$, capturing the state-to-state sensitivity of the system over time.
    
    \item \textbf{The readout layer (Hamiltonian minimization):} In the PMP framework, optimal control requires minimizing the Hamiltonian. If we approximate the control-influence mapping $G_{\theta}(y)$ as a static matrix $G_{\theta}$, the Hamiltonian minimization term $-\frac{1}{2} R^{-1} G_{\theta}$ collapses into a single constant matrix. The linear readout layer $u = W_{\text{out}} h$ can then be interpreted as a closed-form execution of Hamiltonian minimization.
\end{itemize}

For the recurrent matrices to correctly integrate the theoretical co-state, they must explicitly contain a representation of the local system dynamics (capturing the state derivatives via $F_\theta$ and the control-influence via $G_\theta$). This requirement directly reflects the core premise of the \textit{good regulator theorem}~\citep{conant1970every} and the \textit{internal model principle}~\citep{francis1976internal} in the context of ordinary differential equations (ODEs). This posits that any strictly optimal controller must inherently contain a model of the system it regulates.  

\section{Structuring Internal Dynamics in Recurrent Policies}\label{sec:4}
In standard model-free RL, however, policies are optimized purely via scalar reward maximization. While the recurrent architectures discussed in the previous section possess the structural capacity to represent neural co-states, pure policy optimization provides no guarantee that their hidden states will actually converge to the optimal deterministic PMP co-state representations during learning.
To bridge this optimization gap, we introduce a neural co-state loss that aligns the hidden states of the network with the theoretical co-states. By leveraging standard actor-critic methods, a co-state loss is derived that is applicable to any recurrent policy that belongs to the NCP class.

\subsection{The Actor-Critic Optimization Framework}
Standard recurrent RL operates via the actor-critic paradigm, utilizing an actor ($\pi_\theta$) to selects actions and a critic ($V_\phi$) to estimate returns. Optimization commonly relies on PPO~\citep{schulman2017ppo} alongside generalized advantage estimation (GAE)~\citep{schulman2015high}. PPO stabilizes updates by clipping the objective function to prevent destructively parameter shifts, while GAE reduces policy gradient variance by exponentially smoothing the critic's temporal difference errors.

To optimize the recurrent parameters over finite trajectories, gradients are computed recursively backwards through the unrolled network via back-propagation through time (BPTT). The BPTT error propagation takes the standard form:
\begin{equation}
    \delta_{t-1} = \delta_{t} \frac{\partial h_{t}}{\partial h_{t-1}} + \frac{\partial \mathcal{L}}{\partial h_{t-1}}, \label{eq:bptt}
\end{equation}
where $\delta_t$ is the accumulated error gradient of the objective function with respect to the hidden state at time $t$. Mathematically, this recursive gradient chaining is the discrete-time computational equivalent of integrating the co-state (or adjoint) equation.

\begin{algorithm*}[!tb]
   \caption{Neural Co-state Optimization for Recurrent Networks for PPO}
   \label{alg:neural-costate-adaptation}
   \small
\begin{algorithmic}
   \STATE \textbf{Initialize:} actor parameters $\theta = \{ E_{\theta}, \text{GRU}_{\theta}, W_{\text{out}}\}$, critic parameters $\phi = \{V_{\phi}\}$, batch size $M$, loss coefficients $c_1, c_2, c_3$.
   \FOR{iteration $= 1, 2, \dots$} 
       \STATE Initialize hidden state $h_0$ 
       \STATE Run policy in environment for $T$ timesteps, collecting $(y_t, h_t, u_t, r_t)$:
        \STATE \quad Encode observation: $\tilde{y}_t = E_{\theta}(y_t)$
       \STATE \quad Integrate hidden state: $h_t = \text{GRU}_\theta(\tilde{y}_t, h_{t-1})$
       \STATE \quad Apply implicit minimum principle: $\mu_t = \argmin_{u\in\mathcal{U}} \mathcal{H} = W_{\text{out}} h_t$
       \STATE \quad Sample action $u_t \sim \mathcal{N}\left(\mu_t , \sigma\right)$
       
       \STATE Compute advantage estimates $\hat{A}_t$ using critic $V_\phi(\tilde{y}_t, h_t)$
       \STATE Compute HJB co-state targets $\hat{\lambda}_t = \text{stop\_gradient}\left(\nabla_{\tilde{y}} V_\phi(\tilde{y}_t, h_t)\right)$
       
       \STATE Optimize joint PPO objective with $K$ epochs and minibatch size $M$:
       \STATE \quad $\mathcal{L}_{\text{total}}(\theta, \phi) = \mathcal{L}_{\text{actor}} + c_1 \mathcal{L}_{\text{critic}} - c_2 \mathcal{L}_{\text{entropy}} + c_3 \mathcal{L}_{\text{co-state}} $
       \STATE \quad \textbf{where} $\mathcal{L}_{\text{co-state}}(\theta) = \mathbb{E}_{t} \left[ 1 - \frac{h_t \cdot \hat{\lambda}_t}{\|h_t\|_2 \|\hat{\lambda}_t\|_2} \right] $
   \ENDFOR 
\end{algorithmic}
\end{algorithm*}
\subsection{Extracting Co-state Targets from HJB}
While standard actor-critic algorithms excel at credit assignment, they optimize recurrent parameters purely for expected return maximization, leaving the differential structure of the hidden state unconstrained. To align the hidden state with the theoretical PMP co-state, we bridge PMP with dynamic programming via the Hamilton-Jacobi-Bellman (HJB) equation.
A foundational property linking these frameworks is that the optimal continuous-time co-state is strictly equivalent to the spatial gradient of the optimal value function: 
    $\lambda^{\star}(t) = \nabla V^{\star}(x(t)).$
    
In the actor-critic paradigm, the learned critic network $V_\phi(y_t)$ approximates this true value function. Using automatic differentiation, we can directly compute its gradient with respect to the encoded environment observation to obtain a functional co-state approximation: $\hat{\lambda}_t = \nabla_{\tilde{y}} V_\phi(\tilde{y}_t)$, where $\tilde{y} = E(y_t)$ is the encoded observation.

Because theoretical co-states can take on arbitrarily large magnitudes while recurrent hidden states are typically bounded by non-linear activations, minimizing the unnormalized Euclidean distance can cause instability. Therefore, we regularize the directional alignment using the cosine distance:
\begin{equation}
    \mathcal{L}_{\text{co-state}}(\theta) = \mathbb{E}_{t} \left[ 1 - \frac{h_t \cdot \hat{\lambda}_t}{\|h_t\|_2 \|\hat{\lambda}_t\|_2} \right] 
\end{equation}
Algorithm~\ref{alg:neural-costate-adaptation} summarizes our approach to neural co-state optimization.

\section{Experiments and Results}
We empirically evaluate the NCP framework across 8 continuous control tasks from the DeepMind Control Suite~\citep{tassa2018dmc}, using the PPO framework. To explicitly test the memory capacity of the learned latent dynamics, we introduce partial observability via a stochastic sensor dropout mechanism. At each timestep $t$, the entire observation vector $y_t$ is completely masked through a scalar mask $m_t \sim \text{Bernoulli}(1-p)$, resulting in the corrupted observation $\tilde{y}_t = m_t y_t$. This total sensor blackout requires the recurrent policy to integrate past interactions to handle sensor dropouts. All models are trained with a fixed masking probability ($p_\text{train} = 0.5$). Experimental details are described in Appendix~\ref{app:experimentdetails}, while the implementation code is open-source available at \texttt{github.com/DavidLeeftink/neural-costate-policies}.

Through this design, our experiments address three core questions: (1) \textbf{Performance:} does the co-state loss improve sample efficiency and expected returns? (2) \textbf{Sensitivity:} how sensitive is training stability to the co-state loss coefficient? (3) \textbf{Robustness:} do NCP hidden states generalize better in zero-shot learning with out-of-distribution (OOD) masking? 

\subsection{Continuous Control and Zero-Shot Robustness on the DeepMind Control Suite}
\begin{figure}[!t]
    \centering
    \includegraphics[width=\linewidth]{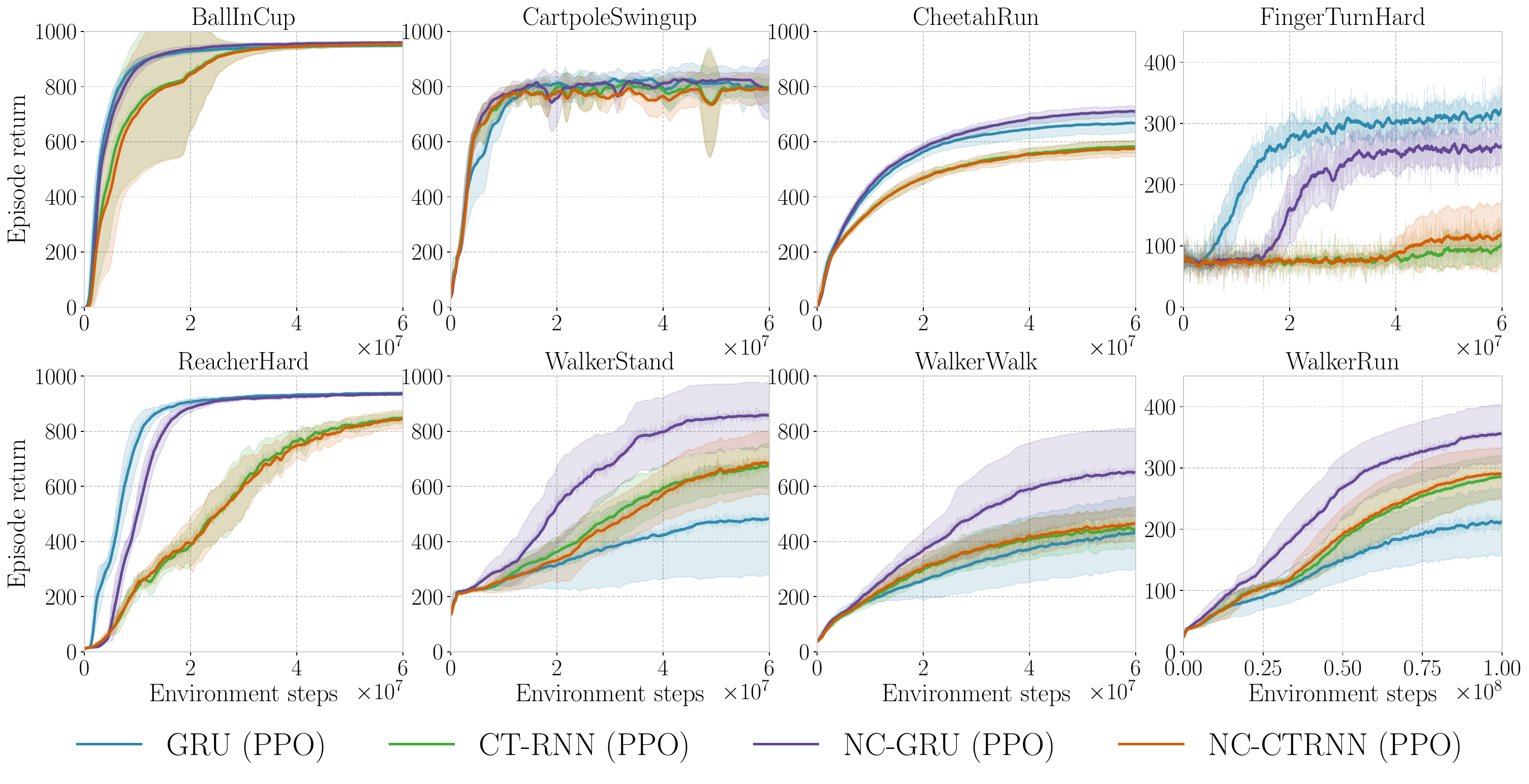}
    \caption{\textbf{DMControl tasks under partial observability.} Learning curves for standard and NCP-regularized recurrent architectures (GRU and CT-RNN) evaluated under a sensor dropout regime. During training, the entire observation vector is zero-masked at each timestep with probability $p=0.5$. Solid lines denote the mean episode return across 10 independent random seeds, and shaded regions represent $\pm 1$ standard deviation. For visual clarity, curves are smoothed using a simple moving average over 30 logged checkpoints (equivalent to roughly $10^6$ environment steps).}
    \label{fig:dmc-results}
\end{figure}
Addressing (1) \textbf{Performance}, we benchmarked NC-GRU and NC-CTRNN against their unregularized counterparts (Figure~\ref{fig:dmc-results}).
The impact of the PMP-derived co-state loss varies across task complexity. 
On simpler, lower-dimensional tasks (e.g., \textit{CartpoleSwingup}, \textit{BallInCup}), all models successfully solve the environment. However, the auxiliary co-state regularization introduces a marginally slower initial convergence for the NCP variants compared to the unregularized baselines. On \textit{FingerTurnHard}, all architectures struggle to achieve high returns, showing that dexterous manipulation under observation masking remains challenging. For this task, the co-state loss marginally weakens the performance compared to the standard GRU. Lastly, on locomotion tasks (\textit{WalkerStand}, \textit{WalkerWalk}, and \textit{WalkerRun}, \textit{CheetahRun}), the NC-GRU model  yields substantial improvements in both final asymptotic return and overall stability. 
Whereas the NC-GRU is outperforming the standard GRU on most cases, the NC-CTRNN achieves mostly similar performance to the original CTRNN policy, suggesting the co-state loss term is not as effective for this architecture.

Analysis of the results suggests that the co-state priors perform well at regularizing rhythmic locomotion, however the performance degrades in contact-heavy tasks like \textit{FingerTurnHard}. This challenge possibly could be related to extracting the co-state targets using the w.r.t. the noisy partial derivative observation rather than the true state, which forms a challenge for continuous-time co-state tracking.

\subsection{The effect of the co-state loss coefficient}

To address (2) \textbf{Sensitivity}, we compare the standard GRU model against the neural co-state GRU model with co-state coefficients of $c_3\in \{ 0.01, 0.05, 0.1\} $ on the \textit{WalkerStand} task, while we use the Brax~\citep{freeman2021brax} parameter configurations for the remaining training coefficients. Figure~\ref{fig:ablation-study} shows that incorporating the co-state loss during training substantially improves the expected returns. 

Unlike the standard GRU that plateaus early, all co-state configurations improve performance, peaking optimally at $c_{3}=0.05$ before degrading at $c_{3}=0.1$. Larger coefficients accelerate the reduction of co-state alignment loss, ensuring tighter adherence to the optimal control prior. In contrast, the unregularized GRU fails to naturally align, maintaining a high loss near one throughout training despite improving expected returns.

To address (3) \textbf{Robustness}, we evaluate model resilience in a zero-shot setting by increasing observation masking from 50\% to 75\%. While all models experience performance degradation under this out-of-distribution regime, the NC models do not demonstrate inherent robustness to the increased masking frequency itself. However, the policies are able to largely retain the higher median returns inherited from its superior training-time performance. This suggests that while co-state regularization significantly elevates the agent's operating point, the resulting performance floor in difficult settings is a direct reflection of the gains achieved during the training distribution.
\begin{figure}[t]
    \centering
    \includegraphics[width=1.\linewidth]{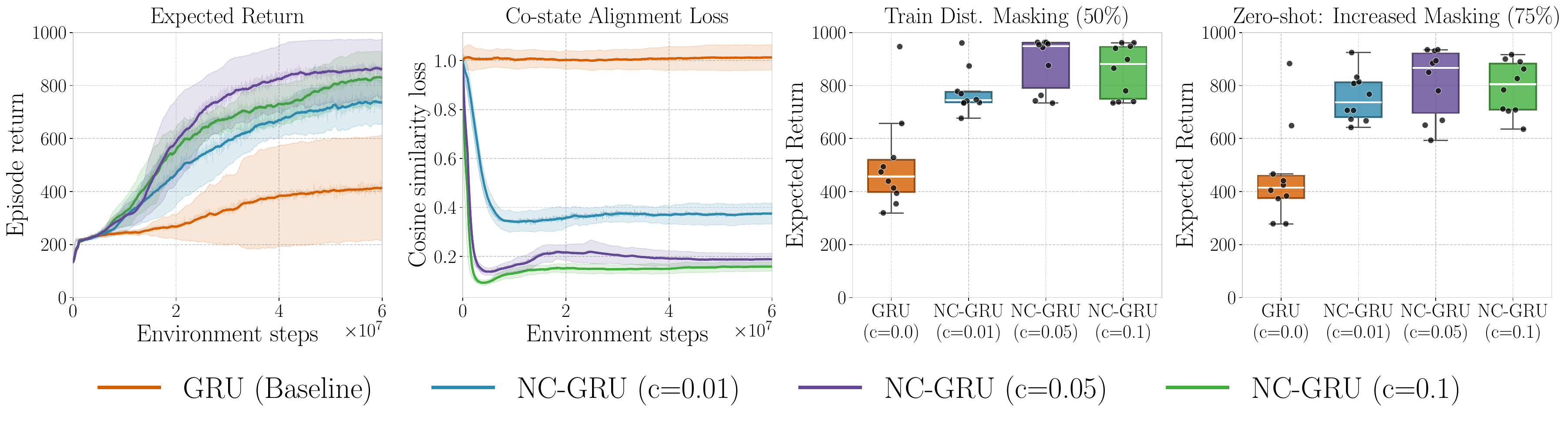}
    \caption{\textbf{Ablation of the co-state penalty coefficient on the WalkerStand task.} \textit{(Left)} Expected returns during training under 50\% observation masking. \textit{(Middle left)} Co-state cosine similarity loss, demonstrating the structural alignment of the latent space over time. \textit{(Middle right)} Final performance distributions evaluated within the training distribution.  \textit{(Right)} Out-of-distribution zero-shot robustness evaluated under 75\% sensor masking. Solid lines denote the mean episode return across 10 independent random seeds, and shaded regions represent $\pm 1$ standard deviation. For visual clarity, curves are smoothed using a simple moving average over 30 logged checkpoints (equivalent to roughly $10^6$ environment steps). Boxplots aggregate 100 evaluations per seed across varying initial conditions, with white horizontal lines indicating the median.}
    \label{fig:ablation-study}
\end{figure}

\section{Related work}
\textbf{Recurrent policies and hidden state representations. } 
Moving beyond explicit belief-state tracking, modern deep RL encodes historical context as an unstructured latent state via RNNs. To stabilize long-horizon credit assignment, recent architectures introduce inductive biases, such as structured state space models~\citep{gu2021efficiently} or continuous-time oscillatory dynamics~\citep{rusch2020coupled}. While these methods enforce a general \textit{dynamical} prior, NCPs enforce an \textit{optimal control} prior. Although earlier work has considered the relationship between optimal control and neural optimization~\citep{liu2021dynamic, bensoussan2023value}, NCPs are unique by bridging neural memory and classical control theory;  regularizing the latent space to track the necessary optimality conditions.

\textbf{Physics-informed latent representations and internal forward models.}
This control-theoretic regularization aligns with physics-informed machine learning, where architectures like Hamiltonian neural networks~\citep{greydanus2019hamiltonian} and deep Lagrangian networks~\citep{lutter2019deep} embed physical conservation laws. Whereas these works primarily model \textit{passive} environmental dynamics, NCPs explicitly structure the agent's \textit{internal} belief state to realize the internal model principle~\citep{francis1976internal}.

\textbf{Pontryagin's Minimum Principle in Deep RL.} While the Hamilton-Jacobi-Bellman (HJB) equation forms the theoretical backbone of value-based RL, its trajectory-centric counterpart, PMP, has only recently gained traction. PMP in RL has been restricted strictly to model-based paradigms: offline policy evaluation~\citep{jin2020pontryagin}, deterministic trajectory optimization~\citep{gu2022pontryagin, eberhard2025pontryagin} and planning under uncertainty~\citep{leeftink2025probabilistic}. NCPs diverge from this lineage by operating fundamentally model-free, requiring only a targeted co-state loss to ground closed-loop recurrent policies in optimal control theory.

\section{Discussion} 
In this work, we established a structural correspondence between the hidden states of recurrent reinforcement learning policies and the theoretical co-states of the Pontryagin minimum principle. By conceptualizing standard architectures like CT-RNNs and GRUs as dynamic processes governed by  Hamiltonian minimization, we introduced the neural co-state policy (NCP) framework. Our results demonstrate that extracting PMP co-state targets from the HJB value gradient provides a tractable auxiliary loss that successfully grounds internal memory in optimal control theory, matching or improving performance on partially observable continuous control tasks.

\textbf{Limitations and future work.} While bridging a critical theoretical gap, NCPs derive the co-state targets via the critic's spatial gradient. Because this relies on automatic differentiation, these targets can be noisy --- a known vulnerability in deterministic policy gradients~\citep{lillicrap2016}. A conceptual limitation is that standard recurrent architectures require bounded non-linear activations for numerical stability, deviating from the unconstrained integration of theoretical PMP co-states. 

This motivates three avenues for future research. First, the NCP equivalence framework can be extended to identify and map broader classes of memory architectures, such as coupled oscillator or long-short term memory (LSTM;~\citet{hochreiter1997long}) models. Second, theoretically principled readout layers can be designed to explicitly solve time- and fuel-optimal tasks that require discontinuous bang-bang or bang-off-bang controls (App.~\ref{app:readout}). Finally, integrating probabilistic value network ensembles allows for increased data-efficiency, providing optimal dynamic representations under epistemic uncertainty (App.~\ref{app:epistemic}).

\textbf{Broader Impact.} Beyond algorithmic control, our framework shares deep conceptual connections with biological motor control. The human brain operates as an exceptionally intricate recurrent policy, to resolve partial observability in continuous motor tasks~\citep{mastrogiuseppe2018linking,tsay2026cerebellar}. Viewing the brain through the lens of NCPs could provide a theoretical framework for understanding how biological networks encode dynamic optimality. 
Furthermore, continuous control under partial observability is the defining challenge of physical robotics~\citep{schneider2022active}. Real-world tasks such as bipedal locomotion, dextrous manipulation, and autonomous navigation are inherently plagued by noisy sensors, temporary occlusions, and hardware latency. By grounding the memory state of the policy in the mathematics of optimal control, NCPs offer a principled path forward for robotic control. We discuss broader societal impact in Appendix~\ref{app:impactstatement}.

Ultimately, this work demonstrates that previously black-box neural memory can be anchored in the mathematics of the minimum principle, providing a new theoretical perspective on recurrent computation in continuous control.




\section*{Acknowledgements}
This publication is part of the project ROBUST: Trustworthy AI-based
Systems for Sustainable Growth with project number KICH3.LTP.20.006,
which is (partly) financed by the Dutch Research Council (NWO), ASMPT,
and the Dutch Ministry of Economic Affairs and Climate Policy (EZK)
under the program LTP KIC 2020-2023. All content represents the opinion
of the authors, which is not necessarily shared or endorsed by their
respective employers and/or sponsors.


\bibliography{references_compact}
\bibliographystyle{icml2025}

\newpage
\appendix
\onecolumn

\section*{Appendix}
\appendix
\section{Impact Statement}\label{app:impactstatement}
This work provides a foundation for reinforcement learning under partial observability. The proposed theoretical link makes previously black-box policies more interpretable and improves their performance. Practically, controllers capable of sustaining stable behavior partial observations are vital for safety-critical autonomous systems, such as robotics and medical devices.

Naturally, advancements in continuous control carry dual-use implications. Algorithms capable to tolerate sensor failure are equally applicable to military robotics and autonomous weapons. Although our current focus is strictly on theoretical algorithmic design and simulated benchmarks, translating these highly resilient RL agents to the real world will require careful, domain-specific oversight to prevent misuse.
\section{Optimal Hamiltonian System} \label{app:A}

\subsection{Additional background on co-states and the control-Hamiltonian}
Repeating the main text for completeness, the control-Hamiltonian expresses how the cost-to-go changes over time with the dynamics:
\begin{equation}
    \mathcal{H}(x, \lambda, u) \coloneqq \mathcal{L}(x,u) + \lambda^{\top} \dot{x} = q(x) + c(u) + \lambda^{\top} \bigl(f(x) + g(x)u\bigr).
\end{equation}
Here, $\lambda \in \mathds{R}^{d_x}$ are referred to as co-states, and encode the sensitivity of the value of a state with respect to the dynamics. These follow from solving the constrained optimization problem in Eq. \ref{eq:optimalcontrolproblem} using the Lagrange multiplier function $\lambda(t)$ for $t \in [t_0, t_f]$. By defining the differential function of the co-states, we obtain a coupled ODE:
\begin{align}
    \dot{x}(t) &= \nabla_\lambda \mathcal{H}(x, \lambda, u) = f(x(t)) + g(x(t))u, \\
    \dot{\lambda}(t) &= -\nabla_x \mathcal{H}(x, \lambda, u) = -\nabla_x q - \left(\nabla_x \dot{x}\right) \lambda(t)
\end{align}
for $t \in [t_0, t_f]$, where the notation $(\nabla_y f)_{i,j} \coloneqq \partial f_j / \partial y_i$ denotes Jacobians, in this case $\nabla_x \mathcal{H} = ( \frac{\partial \mathcal{H}}{\partial x_1}, \ldots, \frac{\partial \mathcal{H}}{\partial x_{d_x}} )^{\top}$. The interaction dynamics between the states and co-states form a coupled dynamical system, which we refer to as the Hamiltonian system.
Since the co-states incorporate the cost function, this representation lends itself well to the inclusion of a notion of optimality. The minimum principle by \citet{pontryagin1987}, for historical reasons also referred to as the maximum principle, states:
\begin{equation} 
    u^{\star} = \argmin_{u\in\mathcal{U}} \mathcal{H}(x^{\star}, \lambda^{\star}, u) \label{eq:minimumprinciple_boundedcontrols}
\end{equation}
This tells us that if the optimal control $u^{\star}(t)$ is applied, producing corresponding unique optimal paths $x^{\star}(t)$ and $\lambda^{\star}(t)$, the Hamiltonian function is minimized at all time points $t \in [t_0, t_f]$. This is a necessary but not a sufficient condition of optimality, implying the following optimal system:

\subsection{Derivation of the optimal Hamiltonian system}
In this section, we provide the derivation of the necessary optimality conditions for the deterministic continuous-time control problem, drawing on standard results from the calculus of variations \citep{bryson1975applied, kirk2004optimal}. For brevity, we consider an unbounded control set $\mathcal{U} = \R^{d_u}$ which we approach in continuous-time by extremizing the Gateaux derivative. For bounded control sets, the proof is significantly more involved and involves the \textit{needle-variation}, originally proposed by~\citet{pontryagin1987}. We refer the reader to Chapter 4 in~\citet{liberzon2011calculus} for the needle-variation proof.

Let $u^{\star}(t)$ be an optimal control that minimizes the cost functional $J(u)$ over the set of admissible controls $\mathcal{U}$. Let the optimal trajectory be $x^{\star}(t)$, with corresponding co-states $\lambda^{\star}(t)$, and let the optimal Hamiltonian be evaluated as $\mathcal{H}^{\star} \coloneqq \mathcal{H}(x^{\star}(t), \lambda^{\star}(t), u^{\star}(t))$. A necessary condition for optimality is that the gradient of the Hamiltonian with respect to the control vanishes.

\textbf{Proof.} 
To enforce the dynamic equality constraint $\dot{x} = f(x) + g(x)u$, we introduce the Lagrange multiplier function $\lambda(t)$ (the co-state) that ensures this constraint for all $t \in [ t_0, t_f]$. Because the dynamic constraint equals zero along any valid trajectory, adding it to the integral does not change the value of $J(u)$:
\begin{equation}
    J(u) = \Phi(x(t_f)) + \int_{t_0}^{t_f} \left[ q(x) + c(u) + \lambda(t)^\top \Big( f(x) + g(x)u - \dot{x} \Big) \right] dt.
\end{equation}
By substituting the definition of the control-Hamiltonian, $\mathcal{H}(x, \lambda, u) = q(x) + c(u) + \lambda^\top (f(x) + g(x)u)$, we can rewrite the objective as:
\begin{equation}
    J(u) = \Phi(x(t_f)) + \int_{t_0}^{t_f} \left[ \mathcal{H}(x, \lambda, u) - \lambda(t)^\top \dot{x} \right] dt.
\end{equation}

We consider the G\^{a}teaux derivative of this cost functional:
\begin{equation}
    \delta J(u) = \lim_{\epsilon \to 0} \frac{J(u + \epsilon \delta u) - J(u)}{\epsilon},
\end{equation}
where $\delta u(t)$ is an arbitrary control variation defined over $[t_0, t_f]$. The first variation can be obtained by isolating the first-order terms around the optimal solution $u^{\star}(t)$:
\begin{equation}
    J(u^{\star} + \epsilon \delta u) - J(u^{\star}) \approx \delta J|_{u^{\star}}(\delta u)\epsilon,
\end{equation}
where higher-order terms are neglected as $\epsilon \to 0$. Since, by definition, $u^{\star}(t)$ is a minimum for $J$, its first variation must be zero. Note that a variation in the control $\delta u(t)$ naturally induces a corresponding variation in the state trajectory, denoted as $\delta x(t)$.
A well-known result in optimal control theory~\citep[Eq. 5.1.13]{kirk2004optimal} states that to successfully absorb the variations in the state trajectory $\delta x(t)$, the co-state multiplier $\lambda(t)$ must be chosen to satisfy the differential equation:
\begin{equation}
    \dot{\lambda}^{\star}(t) = -\nabla_x \mathcal{H}(x^{\star}, \lambda^{\star}, u^{\star}),
\end{equation}
with the terminal transversality condition $\lambda^{\star}(t_f) = \nabla_x \Phi(x^{\star}(t_f))$.
With the state variations eliminated by this specific choice of optimal co-state dynamics, the remaining first variation of the cost with respect to the control is given by~\citep[Eq. 5.1.14]{kirk2004optimal}:
\begin{equation}
    \delta J|_{u^{\star}}(\delta u) = \int_{t_0}^{t_f} \left[ \nabla_u \mathcal{H}^{\star} \right]^{\top} \delta u(t) dt.
\end{equation}
By the fundamental lemma of the calculus of variations, since $\delta u(t)$ is an arbitrary variation, the integral can only equal zero if the integrand itself vanishes identically. It follows that:
\begin{equation}
    \nabla_u \mathcal{H}^{\star} = 0, \quad \text{for all } t \in [t_0, t_f].
\end{equation}

For unconstrained control sets, this stationary condition defines the optimal control law. For bounded, closed control sets $\mathcal{U}$, this local stationary condition is generalized by the PMP to the global minimization of the Hamiltonian across the action space:
\begin{equation}
    u^{\star}(t) = \argmin_{u\in\mathcal{U}} \mathcal{H}(x^{\star}(t), \lambda^{\star}(t), u).
\end{equation}
This establishes the optimality condition for the control sequence.

\section{Readout Layer Design via Optimal Control Principles} \label{app:readout}
\begin{figure}[!t]
    \centering
    \resizebox{\columnwidth}{!}{%
    \begin{tikzpicture}[>=stealth, every node/.style={transform shape}]

        \def\umax{1.2}  

        \def\xmax{3.6}  

        \def\liney{3.9}




        \draw[thick, black!20] (-1.0, \liney) -- (15.0, \liney);

        \draw[thick, black!15] (4.7, -2.6) -- (4.7, \liney);

        \draw[thick, black!15] (10.5, -2.6) -- (10.5, \liney);




        \begin{scope}[xshift=0cm]

            \node[anchor=south, align=center, text width=5cm] at (1.6, \liney+0.1) {\textbf{1. Quadratic Control}};


            \node[anchor=north, align=center, text width=5cm] at (1.6, \liney-0.1) {
                $\mathcal{L}(x,u) = q(x) + u^\top R u$
            };


            \node[anchor=north, align=center, text width=5cm] at (1.6, 3.15) {

                $u^* = R^{-1} \sigma $ \\ \vspace{10pt}
                where $\sigma = - \frac{1}{2}g(x,t)^{\top} \lambda(t)$

            };

            \def\quadpath{(0, 1.0) to[out=0, in=135] (1.0, 0) to[out=-45, in=180] (2.2, -1.0) to[out=0, in=180] (3.4, 0)}

        \begin{scope}[yshift=-.35cm]
    \def\curvefunc{0.88*\umax * (1 + 1.1*\x) * exp(-0.6*\x) * cos(deg(\x*1.15))} 
    \def\tend{3.5} 
    \def\crossing{1.36} 

    \fill[blue!10] plot[domain=0:\crossing, samples=100] (\x, {\curvefunc}) -- (\crossing,0) -- (0,0) -- cycle;
    \fill[blue!10] plot[domain=\crossing:\tend, samples=100] (\x, {\curvefunc}) -- (\tend,0) -- (\crossing,0) -- cycle;

    \draw[dashed, black!15, thick] (0, \umax) -- (\xmax, \umax);
    \draw[dashed, black!15, thick] (0, -\umax) -- (\xmax, -\umax);
    
    \draw[->, thick, black!80] (-0.2, 0) -- (\xmax, 0) node[right, black] {$t$};
    \draw[->, thick, black!80] (0, -\umax-0.4) -- (0, \umax+0.5) node[left, black] {$u^*(t)$};

    \draw[thick] (-0.1, \umax) -- (0.1, \umax) node[left=2pt] {$u_{\max}$};
    \draw[thick] (-0.1, -\umax) -- (0.1, -\umax) node[left=2pt] {$u_{\min}$};

    \draw[very thick, blue!80, smooth, samples=100] 
        plot[domain=0:\tend] (\x, {\curvefunc});
\end{scope}

        \end{scope}




        \begin{scope}[xshift=5.8cm]

            \node[anchor=south, align=center, text width=5cm] at (1.6, \liney+0.1) {\textbf{2. Time/State (Bang-Bang)}};

            \node[anchor=north, align=center, text width=5cm] at (1.6, \liney-0.1) {
                $\mathcal{L}(x,u) = q(x) \text{ or } 1$
            };
            \node[anchor=north, align=center, text width=5cm] at (1.6, 3.25) {
                           $\begin{aligned}
                    u^* = \begin{cases}
                        \text{sign}(\sigma) u_{\text{max}} & \text{if } |\sigma| > 0\\ 
                        0 & \text{if } \sigma = 0\\ 
                    \end{cases} 
                \end{aligned}
                $
                where $\sigma = - \frac{1}{2} g(x,t)^{\top} \lambda(t)$
            };

            \begin{scope}[yshift=-0.35cm]

                \fill[violet!10] (0, \umax) rectangle (1.3, 0);

                \fill[violet!10] (1.3, -\umax) rectangle (2.6, 0);

                \fill[violet!10] (2.6, \umax) rectangle (3.4, 0);

                \draw[dashed, black!15, thick] (0, \umax) -- (\xmax, \umax);

                \draw[dashed, black!15, thick] (0, -\umax) -- (\xmax, -\umax);

                \draw[->, thick, black!80] (-0.2, 0) -- (\xmax, 0) node[right, black] {$t$};

                \draw[->, thick, black!80] (0, -\umax-0.4) -- (0, \umax+0.5) node[left, black] {$u^*(t)$};

                \draw[thick] (-0.1, \umax) -- (0.1, \umax) node[left=2pt] {$u_{\max}$};

                \draw[thick] (-0.1, -\umax) -- (0.1, -\umax) node[left=2pt] {$u_{\min}$};

                \draw[very thick, violet!80] (0, \umax) -- (1.3, \umax) -- (1.3, -\umax) -- (2.6, -\umax) -- (2.6, \umax) -- (3.4, \umax);

            \end{scope}

        \end{scope}




        \begin{scope}[xshift=11.6cm]

            \node[anchor=south, align=center, text width=5cm] at (1.6, \liney+0.1) {\textbf{3. Fuel (Bang-Off-Bang)}};

            \node[anchor=north, align=center, text width=5cm] at (1.6, \liney-0.1) {
                $\mathcal{L}(x,u) = q(x) + \|u\|$
            };
            \node[anchor=north, align=center, text width=5cm] at (1.6, 3.3) {
            $\begin{aligned}
                    u^* = \begin{cases}
                        \text{sign}(\sigma) u_{\text{max}} & \text{if } |\sigma| > 1\\ 
                        0 & \text{if } |\sigma| < 1 \\ 
                    \end{cases} 
                \end{aligned}
                $
                where $\sigma = - \frac{1}{2} g(x,t)^{\top} \lambda(t)$

            };
            \begin{scope}[yshift=-.35cm]

                \fill[teal!10] (0, \umax) rectangle (1.0, 0);

                \fill[teal!10] (2.2, -\umax) rectangle (2.9, 0);

                \draw[dashed, black!15, thick] (0, \umax) -- (\xmax, \umax);

                \draw[dashed, black!15, thick] (0, -\umax) -- (\xmax, -\umax);

                \draw[->, thick, black!80] (-0.2, 0) -- (\xmax, 0) node[right, black] {$t$};

                \draw[->, thick, black!80] (0, -\umax-0.4) -- (0, \umax+0.5) node[left, black] {$u^*(t)$};

                \draw[thick] (-0.1, \umax) -- (0.1, \umax) node[left=2pt] {$u_{\max}$};

                \draw[thick] (-0.1, -\umax) -- (0.1, -\umax) node[left=2pt] {$u_{\min}$};

                \draw[very thick, teal!80] (0, \umax) -- (1.0, \umax) -- (1.0, 0) -- (2.2, 0) -- (2.2, -\umax) -- (2.9, -\umax) -- (2.9, 0) -- (3.4, 0);

                \node[above, yshift=2pt, teal!80!black] at (1.6, 0) {\textbf{\textit{Coast}}};

            \end{scope}

        \end{scope}

    \end{tikzpicture}
    } 

    \caption{\textbf{Hamiltonian policies under varying cost functions:} Energy-optimal (1), time-optimal (2), and fuel-optimal (3). This can produce either smooth optimal control functions or discontinuous bang-bang controls. All of them leverage the same co-states, and can thus can serve as an 'actor' equivalent to the one that leverages the latent neural co-state.}\label{fig:controllaws}

\end{figure} 

In standard deep RL architectures of the actor, a readout layer (or policy head) that maps the final hidden state to an action is typically chosen heuristically. However, because the NCP class explicitly aims to align the hidden state with the theoretical Pontryagin co-state ($h \approx \lambda^{\star}$), the optimal readout layer is no longer arbitrary but instead has to minimize the control-Hamiltonian function with respect to the specific running cost function of the environment. 

By defining the optimal switching function as $\sigma(t) = -g(x,t)^{\top} \lambda(t)$ and assuming symmetric actuator limits $\mathcal{U} \in [-u_{\max}, u_{\max}]$ for notational clarity, we can structurally design the policy head to match different classes of optimal control problems (illustrated in Fig.~\ref{fig:controllaws}):

\textbf{1. Quadratic Control (Smooth Action):}
As explored in the main text, environments with a quadratic penalty on control effort, $\mathcal{L}(x,u) = q(x) + u^{\top} R u$ and control-affine system dynamics $\dot{x}=f(x) + g(x) u$, yield a Hamiltonian that is convex with respect to $u$. Setting the gradient to zero results in the optimal control law: $u^{\star} = R^{-1}\sigma$. In our framework, this corresponds to a standard linear readout layer without bounding activation functions.

\textbf{2. Time-Optimal and State-Only Costs (Bang-Bang Control):}
For many robotics tasks (e.g., minimum-time navigation), the cost consists only of state-dependent terms, meaning control effort is not penalized: $\mathcal{L}(x,u) = q(x)$. Because the Hamiltonian is linear with respect to $u$, and physical actuators possess strict limits $[u_{\min}, u_{\max}]$, the minimum principle dictates that the optimal solution lies exclusively on the boundaries. This results in \textit{bang-bang} control, where the agent switches instantaneously between maximum and minimum effort based on the sign of the switching function:
$$ u^{\star} = \begin{cases} \text{sign}(\sigma) u_{\max} & \text{if } |\sigma| > 0 \\ 0 & \text{if } \sigma = 0 \end{cases} $$
In the NCP framework, a bang-bang optimal controller can be enforced simply by applying a $\text{sign}(\cdot)$ activation function to the policy head.

\textbf{3. Fuel-Optimal Costs (Bang-Off-Bang Control):}
If the environment penalizes the absolute magnitude of the control effort (e.g., conserving fuel in aerospace applications), an $L_1$ penalty is introduced: $\mathcal{L}(x,u) = q(x) + \|u\|$. The optimal control law develops a deadzone where it is optimal to coast (apply no control effort) when the sensitivity is low:
$$ u^{\star} = \begin{cases} \text{sign}(\sigma) u_{\max} & \text{if } |\sigma| > 1 \\ 0 & \text{if } |\sigma| < 1 .\end{cases} $$

While the empirical evaluations in this work focus on the continuous quadratic case, the interpretation of recurrent hidden states as Pontryagin co-states allows one to design readout layers specific to the properties of the cost or reward function of the environment while leaving the underlying recurrent co-state dynamics unchanged.

\section{Epistemic Uncertainty and Mean Hamiltonian Minimization} \label{app:epistemic}

An agent attempting to make an intelligent trade-off between exploration and exploitation must focus its exploration on \textit{epistemic} uncertainty, associated with what the agent does not know yet. Unlike aleatoric uncertainty, which represents inherent and irreducible randomness, epistemic uncertainty can be reduced by gathering more data. 
In optimal control, epistemic uncertainty is often modeled as parametric uncertainty, obtained through Bayesian inference over the environment's dynamics. We can formulate this by considering a policy that aims to minimize the expected cost over a posterior distribution of plausible dynamic models:
\begin{align}
    \pi^{\star} &\coloneqq \argmin_{\pi \in \Pi} \mathbb{E}_{\zeta \sim p(\zeta\mid\mathcal{D})}\left[ J(\pi_{\theta}(x_{\zeta}(t)))\right]   \\
    \text{s.t.} \quad \dot{x}_{ \zeta}(t) &= f_{ \zeta}\bigl(x_{ \zeta}(t),u(t), t\bigr),\quad \text{for } t \in [t_0, t_f] \enspace \nonumber\\ 
    x_{ \zeta}(t_0) &= x_0  . \nonumber   
\end{align}
where $J(\theta)$ denotes the deterministic cost functional evaluated under dynamics parameterized by $\zeta$, and $p(\zeta\mid\mathcal{D})$ is the posterior distribution given past interaction data $\mathcal{D}$. 

To solve this probabilistically robust optimization problem, we can draw a finite set of $m$ samples $\zeta_i \sim p(\zeta\mid\mathcal{D})$. These samples form an ensemble of deterministic dynamical systems driven by a shared control input $u(t)$:
\begin{align}
    \dot{x}_{\zeta_i}(t) &= f_{\zeta_i}\bigl(x_{\zeta_i}(t), u(t),t \bigr), \label{eq:xdot_conditiontheta}\\
    \dot{\lambda}_{\zeta_i}(t) &= -\nabla_{x} \mathcal{H} \bigl(x_{\zeta_i}(t),\lambda_{\zeta_i}(t),u(t),t\bigr),\label{eq:lambdadot_conditiontheta} \\
    \text{s.t.} \quad \lambda_{\zeta_i}(t_f) &= \nabla_x \Phi\bigl(x_{\zeta_i}(t_f)\bigr). \label{eq:transversality_conditiontheta}
\end{align}

If $u^{\star}(t)$ is an optimal control sequence for this ensemble, it must satisfy the Mean Hamiltonian Minimization principle~\citep{leeftink2025probabilistic}:
\begin{equation} 
    u^{\star}(t) = \argmin_{u\in\mathcal{U}}\mathbb{E}_{\zeta \sim p(\zeta\mid\mathcal{D})}\left[\mathcal{H}\left(x_{\zeta}(t),\lambda_{ \zeta}(t), u,t \right)\right], \label{eq:meanhamiltonian}
\end{equation}

\textbf{Implications for Neural Co-state Policies.} 
While the NCP framework introduced in this work involves optimizing a deterministic point estimate, Eq.~\eqref{eq:meanhamiltonian} provides a theoretical foundation for a \textit{Bayesian} NCP. Because the theoretical co-state is linked to the value function gradient, $\lambda^{\star} = \nabla_x V^{\star}$, this uncertainty can be captured by an ensemble of $m$ critics. By maintaining parallel hidden states $h_{\zeta_i}(t)$ across these critics, each representing a unique co-state hypothesis, the actor can take exploratory actions by minimizing the mean of the ensemble Hamiltonians:
\begin{equation}
    u^{\star}(t) \approx \argmin_{u\in\mathcal{U}} \frac{1}{m} \sum_{i=1}^{m} \left\{ c(u) + u^{\top} G_{\theta}(y) h_{\zeta_i}(t) \right\}.
\end{equation}
We leave the empirical validation of this Bayesian ensemble approach to future work.

\section{Implementation and training details} \label{app:experimentdetails}
Here we provide an overview of the algorithmic components, architectural choices, and hyperparameters used in our implementation to ensure reproducibility. Our systems and default hyperparameter configurations follow standard continuous control setups, drawing heavily from the Brax environments \citep{freeman2021brax}. The full implementation is written in JAX~\citep{jax2018github}, leveraging \texttt{jax.vmap} for vectorized environment rollouts and \texttt{jax.lax.scan} for efficient BPTT.

\subsection{Network Architecture}
Both the NC-GRU and NC-CTRNN share a common actor-critic blueprint. To ensure stable learning, the actor and critic use a shared encoder. They differ only in their recurrent core:
\begin{itemize}
    \item \textbf{Observation Encoder:} At each timestep, the raw observation $y_t$ is dynamically normalized (see Appendix~\ref{sec:app_algorithmic}) and passed through a linear encoder followed by a hyperbolic tangent activation: $x_t = \tanh(W_{\text{enc}}y_t + b_{\text{enc}})$.
    \item \textbf{Recurrent Cores:}
    \begin{itemize}
        \item \textbf{GRU:} We use a standard GRU cell mapping the encoded observation and previous hidden state to the next state: $h_t = \text{GRU}(x_t, h_{t-1})$.
        \item \textbf{CT-RNN:} The continuous-time variant employs leaky integration with a learnable time constant $\alpha = \sigma(\text{log\_alpha})$, initialized to 0 (defaulting to a 0.5 leak rate). The update is defined as: $h_t = (1 - \alpha)h_{t-1} + \alpha \tanh(W_{\text{in}}x_t + W_{\text{rec}}h_{t-1})$.
    \end{itemize}
    \item \textbf{Actor-Critic Heads:} The actor outputs the mean of a Gaussian distribution. The log standard deviation ($\log \sigma$) is maintained as a \textit{state-independent} learnable array initialized to zero. 
    \item \textbf{Code-Level Initialization:} All linear and recurrent weights use orthogonal initialization. Hidden layers and the critic output use a scale of $1.0$, while the actor output layer uses a scale of $0.01$ to ensure the initial action distribution is tightly centered around zero, preventing extreme actions early in training. All biases are initialized to 0.
\end{itemize}

\subsection{Algorithmic Components} \label{sec:app_algorithmic}
Our optimization relies on PPO, integrating several standard mechanisms and code-level optimizations for stabilized training described in~\citet{huang202237}:

\begin{itemize}
    \item \textbf{Observation and Reward Standardization:} We maintain running statistics (mean and variance) for observations. Raw observations are normalized dynamically and strictly clipped to $[-10, 10]$ before being passed to the networks. Additionally, raw rewards (negative costs) are scaled dynamically by dividing them by the standard deviation of the running discounted returns: $r^{\text{scaled}}_t = r^{\text{raw}}_t / \sqrt{\text{Var}(R) + \epsilon}$, where $\epsilon = 10^{-8}$, and are subsequently clipped to $[-10, 10]$.
    \item \textbf{Advantage Estimation and Minibatch Normalization:} Generalized Advantage Estimation (GAE) is computed backwards through time over the unrolled trajectories. The resulting advantages are normalized (subtracting the mean and dividing by the standard deviation) at the minibatch level during the PPO update epoch, rather than globally over the entire rollout buffer.
    \item \textbf{Sequential Minibatches:} Our implementation constructs minibatches by fetching sequential trajectory segments to maintain the temporal dependencies required for BPTT.
       
    \item \textbf{Action Clipping:} During environmental interaction, actions sampled from the actor's distribution $a_t \sim \mathcal{N}(\mu_t, \sigma^2)$ are clipped to the valid environmental bounds $[u_{\min}, u_{\max}]$ before being applied to the physics step. The unclipped actions and corresponding log probabilities are stored for the PPO update.
    \item \textbf{Clipped Surrogate Objective \& Value Loss:} The policy is updated using the standard clipped PPO objective with a clipping parameter $\epsilon_{\text{clip}}$. The value loss also employs clipping to constrain the value function update.
\end{itemize}

\subsection{Co-state loss implementation}
Following Algorithm~\ref{alg:neural-costate-adaptation}, we compute the target co-state $\lambda_{\text{target}}$ is isolated from the computational graph via the \texttt{stop\_gradient} operator. The PMP loss is computed as the cosine distance between the recurrent hidden state $h_t$ and the target co-state $\lambda_{\text{target}}$, numerically stabilized by $\epsilon = 10^{-8}$. 

\subsection{Hyperparameters}
The hyperparameter configurations for our systems are detailed in Table~\ref{tab:hyperparameters}. These parameters are based on the standard setup described in~ \citet{freeman2021brax} to ensure fair evaluation, specifically adapted for BPTT sequence lengths and continuous control demands. We explicitly utilize the Adam optimizer with an epsilon parameter of $10^{-5}$, which significantly stabilizes updates compared to standard defaults.

\begin{table}[h]
    \centering
    \caption{PPO and Network Hyperparameters}
    \label{tab:hyperparameters}
    \vspace{0.1in}
    \begin{tabular}{@{}llc@{}}
        \toprule
        \textbf{Category} & \textbf{Hyperparameter} & \textbf{Value} \\
        \midrule
        \multirow{3}{*}{\textbf{Architecture}} 
        & Hidden Layer Size & 128 \\
        & Initialization Scale & 1.0 (General), 0.01 (Actor Out) \\
        & Initialization Type & Orthogonal \\
        \midrule
        \multirow{4}{*}{\textbf{Optimization}} 
        & Optimizer & Adam (via Optax) \\
        & Learning Rate & $2.5 \times 10^{-4}$ (Annealed linearly) \\
        & Epsilon (Adam) & $10^{-5}$ \\
        & Max Gradient Norm & 0.5 \\
        \midrule
        \multirow{8}{*}{\textbf{PPO Details}} 
        & Rollout Batch Size & 32 \\
        & Timesteps & 60m (100m for WalkerRun) \\
        & Minibatches & 4 \\
        & PPO Epochs & 4 \\
        & Discount Factor ($\gamma$) & 0.99 \\
        & GAE Parameter ($\lambda$) & 0.95 \\
        & Clipping Epsilon ($\epsilon_{\text{clip}}$) & 0.2 \\
        & Value Loss Coef ($c_{\text{vf}}$) & 0.5 \\
        & Entropy Coef ($c_{\text{ent}}$) & 0.01 \\
        & Co-state Coef ($c_{\text{costate}}$) & 0.05 (Ablated in experiments) \\
        \bottomrule
    \end{tabular}
\end{table}

\subsection{Hardware}
Our experiments were conducted using NVIDIA RTX 2080 Ti and Quadro RTX 6000 GPUs, paired with Intel Xeon E5-2650 and AMD EPYC 7302 CPUs. Depending on the complexity of the environment, training a single seed for 60 million timesteps constituted between 1 (\textit{Cartpole}) to 6 (\textit{Walker}) hours for the given batch size used.



\end{document}